\title{\LARGE \bf CBF-RL: Safety Filtering Reinforcement Learning in Training with Control Barrier Functions}
\author{*Lizhi Yang, *Blake Werner, Massimiliano de Sa, Aaron D. Ames \thanks{* denotes equal contribution. All authors affiliated with Caltech MCE.\newline This research is supported in part by the Technology Innovation Institute (TII), BP p.l.c., and by The Dow Chemical Company project \#227027AW.}}
\date{September 2025}
\begin{document}

\maketitle
\begin{abstract}
    Reinforcement learning (RL), while powerful and expressive, can often prioritize performance at the expense of safety.  
    Yet safety violations can lead to catastrophic outcomes in real-world deployments.  
    Control Barrier Functions (CBFs) offer a principled method to enforce dynamic safety—traditionally deployed \emph{online} via safety filters.  While the result is safe behavior, the fact that the RL policy does not have knowledge of the CBF can lead to conservative behaviors. 
    This paper proposes CBF-RL, a framework for generating safe behaviors with RL by enforcing CBFs \emph{in training}. 
    CBF-RL has two key attributes: (1) minimally modifying a nominal RL policy to encode safety constraints via a CBF term, and (2) safety filtering of the policy rollouts in training.  Theoretically, we prove that continuous-time safety filters can be deployed via closed-form expressions on discrete-time roll-outs. Practically, we demonstrate that CBF-RL internalizes the safety constraints in the learned policy—both enforcing safer actions and biasing towards safer rewards—enabling safe deployment without the need for an online safety filter. 
    We validate our framework through ablation studies on navigation tasks and on the Unitree G1 humanoid robot, where CBF-RL enables safer exploration, faster convergence, and robust performance under uncertainty, enabling the humanoid robot to avoid obstacles and climb stairs safely in real-world settings without a runtime safety filter.
\end{abstract}
\section{Introduction}

Humanoid robots are capable of interacting with environments designed for humans. 
However, the complex environment, high-dimensional robot dynamics, and noise of the sensors also make them highly vulnerable to unsafe control inputs. 
One unsafe action could lead to damage to both the robot and its surroundings, and thus ensuring safety is essential.
Meanwhile, reinforcement learning (RL) has emerged as a powerful tool for humanoid robots to achieve diverse skills, but focuses mostly on performance \cite{crowley2025optimizing,li2025reinforcement,peng2025gait} and expressiveness \cite{he2025asap,allshire2025visual, truong2025beyondmimic, su2025hitter}.
\begin{figure}[t]
  \centering
  \includegraphics[width=0.9\linewidth]{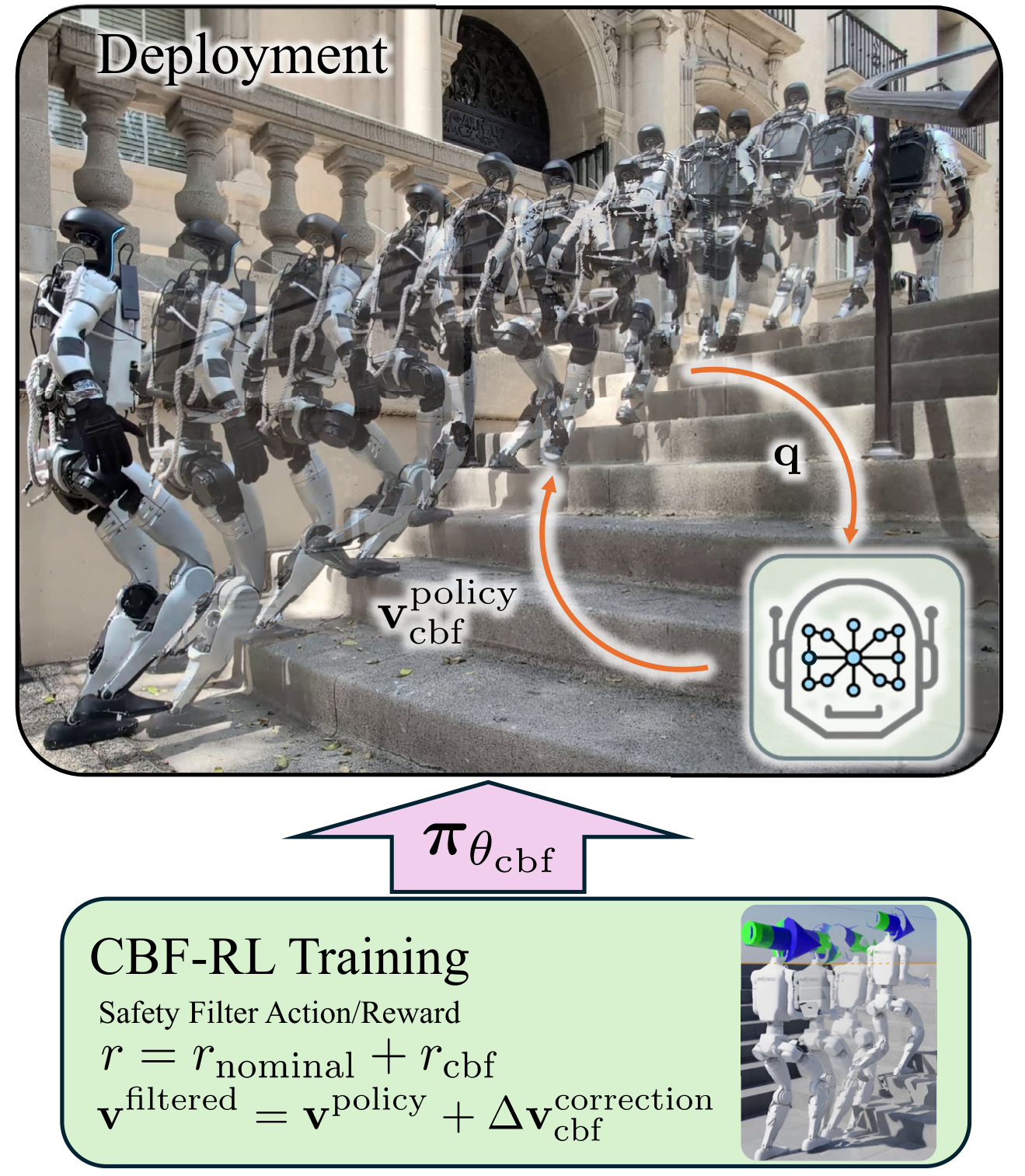}
  \caption{A humanoid robot trained to climb stairs with the CBF-RL framework. Safety is injected into training by both filtering the policy-proposed actions and also provide safety  rewards in addition to task and regularization rewards. During deployment the CBF policy retains safe behavior without a runtime filter.}
  \vspace{-25pt}
  \label{fig:dummy1}
\end{figure}
In this paper, we propose integrating formal safety mechanisms with the powerful exploration and exploitation abilities of RL so that learned policies can reduce or prevent catastrophic behaviors.
To achieve this, we turn to \emph{Control Barrier Functions (CBFs)} \cite{ames2016control} for a principled way to encode state-based safety constraints as forward-invariant sets. 
The CBF conditions are often enforced using \emph{safety filters} \cite{ames2019control}: quadratic programs satisfying the safety constraint by minimally modifying a proposed control input.

There are two key approaches to instantiating safety filters in RL. 
The first approach is \emph{safety filtering} the RL-proposed action and projects it into the safe set before execution \cite{cheng_end--end_2019, ma_model-based_2021,van_wijk_safe_2024, bejarano_safety_2025} or performing constrained updates of the gradient \cite{alshiekh_safe_2017, zhang_model-based_2021}.
This guarantees safety at runtime, but the filter must remain in the loop at deployment, and the learned policy may never internalize the constraint.
This prevents a high-dimensional agent, like a humanoid robot, from discovering novel or efficient behaviors since the exploration space is pruned too aggressively. 
Both also require solving an optimization program at every control step, which may be computationally expensive.
The second approach is \emph{reward shaping} where a residual augments the reward term to penalize states that approach or violate constraint boundaries \cite{krasowski2022provably, dunlap2023run,wabersich2021predictive,wang2022ensuring,nilaksh_barrier_2024, wang2025omni} and encourages the agent towards safer behaviors without active filtering. 
This alone does not directly enforce safe actions during training and is often sensitive to the choice of penalty weights, possibly being insufficient in safety-critical applications.  This paper proposes a fusion of these two approaches to integrating CBFs into RL. 

\vspace{-4pt}
\subsection{Contributions}
In this paper, we show that safety filtering and reward shaping are complementary, proposing \emph{CBF-RL}: a \emph{dual approach} that applies both a closed-form CBF-based safety filter and a barrier-inspired reward term during training.  This safety filters a nominal RL policy, enabling it to learn safe behaviors. 
Catastrophic unsafe actions are prevented by the active filter, and the reward term biases the policy toward avoiding safety interventions.
Therefore, the policy has direct corrective supervision--it observes what it would have done, how the filter corrects it, and how the reward changes. It also learns to propose actions that directly satisfy the barrier condition.
This enables the policy to show safe behaviors at deployment time without an active filter.

We evaluate CBF-RL, and its dual approach to safe RL, with ablations using a 2D navigation task with dynamics randomness, analyzing task completion rates and robustness tests.
We also train humanoid locomotion policies in IsaacLab \cite{mittal2023orbit} with full-order dynamics and domain randomization for obstacle avoidance and stair climbing tasks.  The approach is validated on hardware using a Unitree G1 humanoid with zero-shot sim-to-real policies to exhibit the effectiveness of the proposed framework in high-dimensional, complex systems.
With the dual-trained policy, the robot can successfully navigate around obstacles and climb stairs under command sequences that would lead to failure with nominal policies that don't leverage the CBF-RL framework.

Our contributions are as follows: 
\begin{itemize}
    \item \emph{Conceptually:} We propose a dual CBF-RL training framework that uses both CBF-based active filtering and barrier-inspired rewards during training, and can be deployed without a filter.
    \item \emph{Theoretically:} We provide a continuous-to-discrete-relationship analysis of CBF-RL and a closed-form solution for light-weight integration.
    \item \emph{Practically:} We empirically demonstrate across simulated, and hardware experiments that policies trained using the dual approach can internalize safety and reduce unsafe actions at deployment.
\end{itemize}

\vspace{-3pt}
\subsection{Related Work}
Due to the ease of modifying rewards for training, a number of works incorporate safety value functions into the reward structure to guide policies toward safety.
~\cite{krasowski2022provably, dunlap2023run} use a fixed penalty, 
~\cite{wabersich2021predictive,wang2022ensuring, wang2025omni} utilize correction-proportional penalties, and
~\cite{nilaksh_barrier_2024} proposes an explicit barrier-inspired reward shaping mechanism to reduce unsafe exploration.
These methods encourage safe behavior but do not explicitly direct the policy to exhibit safe behaviors during learning, instead solely relying on the policy to discover safer actions on its own, leading to slower training.

To address this, runtime CBF-based safety filters during training minimally modify the actions of the policy such that the system stays in the user-defined forward-invariant safe set typically by solving an optimization program, be it discrete-time due to the nature of RL training \cite{cheng_end--end_2019, ma_model-based_2021, zhang_model-based_2021, van_wijk_safe_2024, zhang_control_2025,bejarano_safety_2025}, or continuous-time \cite{hailemichael_safe_2022, emam_safe_2022,cheng_safe_2023} which empirically shows improved safety.
For humanoid locomotion, these methods are not ideal as humanoid robots have tight real-time and computation power constraints and inaccurate state estimation from sensor noise.
We instead filter only in simulation during training, and show that the resulting policy retains safety even without a runtime filter. 
We further provide theoretical verification that under certain conditions, continuous-time CBF can be used as conditions for forward invariance for RL simulations that are discrete in nature, and thus we can use the closed-form solution to the CBF-QP to accelerate each step in training.

Many papers utilize model-based approaches \cite{ma_model-based_2021,zhang_model-based_2021,hailemichael_safe_2022, du_reinforcement_2023}. 
which require access to accurate dynamics models.
While theoretically appealing, they are less practical for high-dimensional humanoids where dynamics are complex and uncertain.
Some works also do constrained updates of the gradient to ensure that the policy remains safe \cite{alshiekh_safe_2017, zhang_model-based_2021}; however, they also require solving optimization programs at each gradient update step and limit the exploration of the policy.
Our framework is model-free, requiring only derivatives of the reduced-order model 
(e.g. for the kinematics of a humanoid robot, its Jacobian $J$), and emphasizes lightweight integration with standard policy-gradient RL, in our case proximal policy optimization(PPO) \cite{schulman2017proximal}.
This also leads to the benefit of the policy being able to venture closer to the constraint boundaries, ensuring rich exploration.

In response to the issue of stochastic systems, robust extensions address uncertainty in dynamics or sensing through disturbance observers, Gaussian Process models, or robustified CBFs \cite{li_robust_2020,emam_safe_2022, hailemichael_safe_2022, cheng_safe_2023}.
These methods improve reliability under uncertainty but add significant complexity and computational burden. 
Here we show that by relying on domain randomization during training, dual-trained policies remain safer under uncertainty of the system without explicit models.  

Another line of work learns barrier-like certificates or relates value functions to barrier properties \cite{cohen_safe_2023,tan_value_2023}. 
These methods aim to automate the design of barrier functions or embed them in differentiable layers. 
Our approach assumes analytic barrier functions and focuses on pragmatic integration with RL training rather than barrier discovery.  

The above works have been applied to domains such as spacecraft inspection \cite{van_wijk_safe_2024}, drone control \cite{bejarano_safety_2025}, autonomous driving \cite{zhang_control_2025}, and driver assistance \cite{hailemichael_safe_2022}.
Much of the prior work captures part of the safety challenge, but none directly combines filtering and shaping in a way that allows a policy to \emph{internalize safety} during training and then act autonomously without a filter at deployment, especially not on a high-dimensional system such as a humanoid robot.
This distinction defines the novelty of our contribution.

\section{Background}

\newsec{Reinforcement Learning}
We consider the standard RL formulation of a Markov decision process (MDP) $(\mathcal{X}, \mathcal{U}, P, r, \gamma)$. 
At each timestep $k$, an agent selects an action $\mb{u_k} \in \mathcal{U}$, according to a policy $\mb{\pi_\theta}(\mb{u_k}|\mb{x_k})$ based on an observed state $\mb{x_k} \in \mathcal{X}$, and receives a reward $r(\mb{x_k}, \mb{u_k})$. 
The environment follows the transition dynamics $\mb{x_{k+1}} \sim P(\cdot|\mb{x_k},\mb{u_k})$. 
The goal is then to maximize the expected discounted return $\mathbb{E}\!\left[ \sum_{k=0}^{\infty} \gamma^t r(\mb{x_k}, \mb{u_k}) \right]$.
During deployment, actions are selected as the conditional expectation $\mb{u_k} = E[\mb{\pi_{\theta}}(\mb{u_k} | \mb{x_k})]$.
In this work, we specifically use model-free policy-gradient actor-critic methods such as PPO \cite{schulman2017proximal}, though our approach is agnostic to the specific RL algorithm.
RL also often suffers from reward sparsity when unsafe events like obstacle collisions are rare.
Thus the policy seldom experiences consequences of unsafe actions, leading to vanishing gradients and unstable, slow training.
As such, one part of our method also does reward densification to mitigate this by designing $r(\mb{x_k}, \mb{u_k})$  to give informative nonterminal signals related to safety.
\begin{figure*}
    \centering
    \includegraphics[width=0.95\linewidth]{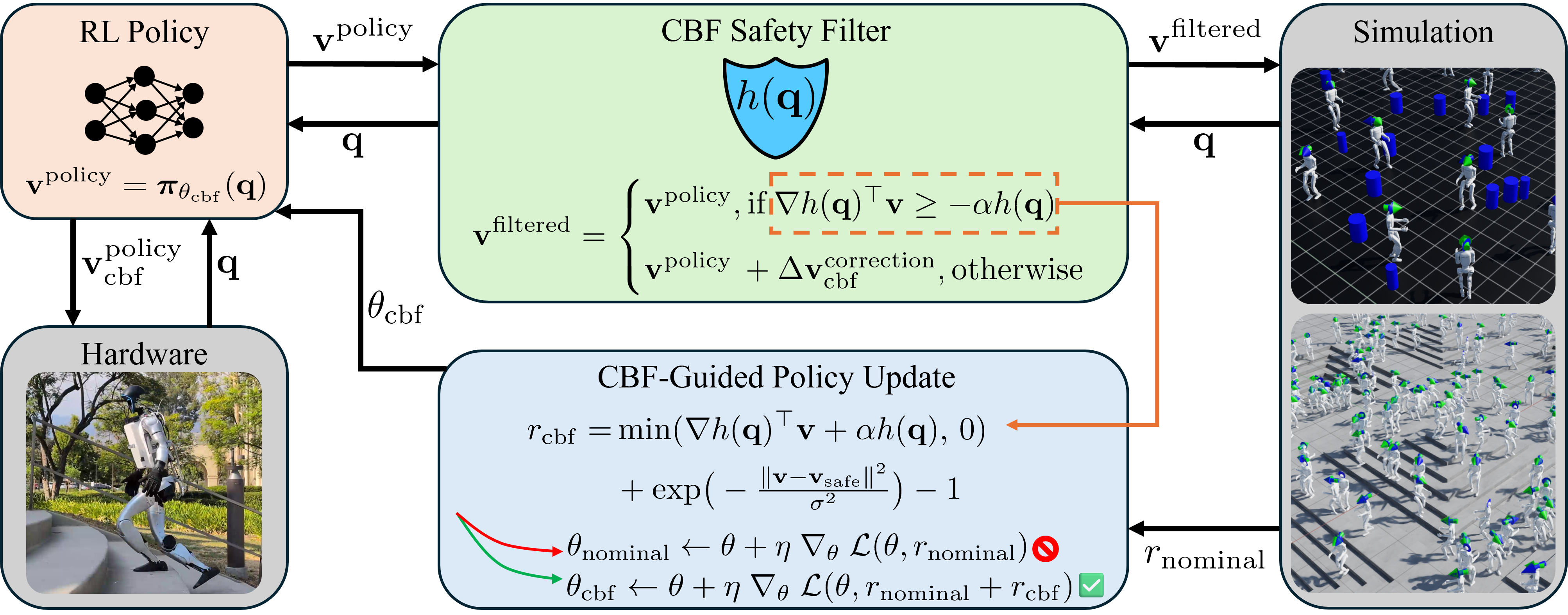}
    \caption{CBF-RL framework: For one given task, the user defines the safety barrier function $h(\mathbf{q})$ and the accompanying $\nabla h(\mb{q})$. 
    During training, the RL policy proposes action $\mb{v}^{\mathrm{policy}}$;
    the CBF safety filter then calculates the closed-form solution of the CBF-QP $\mb{v}^{\mathrm{filtered}}$ and the safety reward $r_{\mathrm{cbf}}$ based on proposed action $\mb{v}^{\mathrm{policy}}$ and agent configuration $\mb{q}$.
    The RL agent executes $\mb{v}^{\mathrm{filtered}}$ in the massively-parallel discretized environment, and the policy is updated with the combination of task, regularization, and safety rewards $r = r_\mathrm{nominal} + r_\mathrm{cbf}$.
    During deployment, the policy is able to output safe actions $\mathbf{v}^{\mathrm{policy}}_{\mathrm{cbf}}$ without needing an explicit runtime filter. Sample code for the single integrator example with CBF-RL reward core construction can be found at \url{https://github.com/lzyang2000/cbf-rl-navigation-demo}.
    }
    \vspace{-18pt}\label{fig:system}
\end{figure*}

\newsec{Reduced-Order Models}
Consider a continuous-time system with dynamics
$
    \dot{\mb{x}} = \mb{\phi}(\mb{x}, \mb{u}),
$
where $\mb{x} \in \R^n$ is the state and $\mb{u} \in \R^m$ is the control input. In our case the state $\mb{x}$ may be very high-dimensional, e.g. joint positions and velocities of a robot. Thus we consider a reduced-order state $\mb{q} \in \R^{n_q}$, $n_q < n$, representing key lower-dimensional features such as the robot’s center of mass position. We define a projection $\mb{p}: \R^n \to \R^{n_q}$ that projects the full-order state onto the reduced-order state. Given a locally Lipschitz continuous feedback control law $\mb{v} = \mb{k}(\mb{q})$, the reduced-order state $\mb q$ is
\begin{align}
    \dot{\mb{q}} = \frac{\partial \mb{p}}{\partial \mb{x}} \, \mb{\phi}\big(\mb{x}, \, \mb{\psi}(\mb{x}, \mb{v})\big) \nonumber &\approx \mb{f}(\mb{q}) + \mb{g}(\mb{q}) \mb{v} 
    \\
     &= \mb{f}(\mb{q}) + \mb{g}(\mb{q}) \mb{k}(\mb{q}),
    \label{eq:reduced_order_sys}
\end{align}
where $\mb{\psi}(\mb{x},\mb{v})$ is a control interface lifting the reduced-order input $\mb{v}$ to a full-order input $\mb{u}$. See \cite{cohen_safe_2023,cohen2025safety} for the connections between reduced and full order models.
\newsec{Control Barrier Function Safety Filters}
In the control barrier function framework, a set of ``safe states" for the system, $\Sc \subseteq \R^{n_q}$, is encoded as the zero superlevel set of a continuously differentiable function $h: \R^{n_q} \to \R$,
\begin{align} 
\label{eq: Safe Set S}
    \Sc \coloneqq \left\{\mb{q}\in\R^{n_q} \,\big|\, h(\mb{q}) \geq 0 \right\},\\
    \partial\Sc \coloneqq \left\{\mb{q}\in\R^{n_q} \,\big|\, h(\mb{q}) = 0 \right\},
    \\
    \mathrm{int}\left(\Sc\right) \coloneqq \left\{\mb{q}\in\R^{n_q} \,\big|\, h(\mb{q}) > 0 \right\}.
\end{align} 
The aim of safety-critical control is to design a feedback control law $\mb{k}(\mb{q})$ that renders $\Sc$ forward invariant.

\begin{definition}

    A set $\Sc \subseteq \R^{n_q}$ is \textit{forward invariant} for \eqref{eq:reduced_order_sys} if, for every initial state $\mb{q}(t_0) \in \Sc$, the resulting state trajectory $\mb{q}: I \subseteq \R \to \R^{n_q}$ remains in $\Sc$ for all $t \in I \cap \R_{\geq t_0}$.
    
\end{definition}

For control-affine systems as in \eqref{eq:reduced_order_sys}, the forward invariance of $\Sc$ can be enforced using CBFs \cite{ames2016control,ames2019control}.

\begin{definition}
    Let $\Sc \subset \R^{n_q}$ be a set as in \eqref{eq: Safe Set S}, with $h$ satisfying $\nabla h|_{\partial \Sc} \neq 0$. Then, $h$ is a \textit{control barrier function} on $\R^{n_q}$ if there exists\footnotemark a $\gamma \in \mathcal{K}_{\infty}^e$ such that for all $\mb{q} \in \R^{n_q}$,
    \begin{align} 
    \label{eq: CBF}
    \!\!\!\! \sup_{\mb{v} \in \R^m} \left\{\dot{h}(\mb{q},\mb{v}) = L_{\mb{f}} h(\mb{q}) + {L_{\mb{g}} h(\mb{q})} \mb{v}\right\} > -\gamma(h(\mb{q})) .
    \end{align} 
\end{definition}
\footnotetext{1: A function $\alpha: \R \to \R$ belongs to $\mathcal K_\infty^e$ if it is increasing, continuous, and satisfies $\alpha(0) = 0, \; \lim_{r \to \pm \infty} \alpha(r) = \pm \infty$.}

Given a CBF $h$ and function $\gamma \in \mathcal K_\infty^e$, the set of control inputs satisfying \eqref{eq: CBF} at $\mb q$ is given by
\begin{align} 
    \mathcal{U}_\mathrm{CBF}(\mb{q}) = \left \{\mb{v} \in \R^m \, \big | \,  \dot{h}(\mb{q}, \mb{v})  \geq -\gamma(h(\mb{q})) \right\}.
\end{align} 
Any locally Lipschitz controller $\mb k$ for \eqref{eq:reduced_order_sys} for which $\mb{k}(\mb q) \in\mathcal{U}_\mathrm{CBF}(\mb{q}) \; \forall \mb q \in \Sc$ enforces forward invariance of $\Sc$ \cite{ames2016control}.

For real-world robotic systems with zero-order hold, sampled-data implementations, it is generally impossible to choose continuous control actions that create the closed-loop system \eqref{eq:reduced_order_sys}. As such, for the remainder of this work, we focus on the discrete-time analogues of these systems,
\begin{align} \label{eq: Discrete Dynamics}
    \mb{q}_{k+1} & = \mb{F}(\mb{q}_{k}) + \mb{G}(\mb{q}_{k}) \mb{v}_k,  & \forall k \in \mathbb{Z}_{\geq 0}, 
\end{align} 
where $\mb{F}: \R^{n_q} \to \R^{n_q}$ and $\mb{G}: \R^{n_q} \to \R^{n_q \times m}$ are the discretization of \eqref{eq:reduced_order_sys} over a time interval $\mathrm{\Delta t} >0$ for a constant input $\mb{v}$. Here, we focus on enforcing forward invariance at sample times\footnote{\vspace{-1em}2: See \cite{breeden2021control} for a discussion of zero-order-hold, intersample safety.} $k \mathrm{\Delta t}$. This can be achieved using a discrete-time CBF \cite{agrawal2017discrete,ahmadi2019safe}.  
\begin{definition}
    Let $\Sc \subseteq \R^{n_q}$ be as in \eqref{eq: Safe Set S} and $\rho \in [0,1]$. The function $h$ is a \textit{discrete-time control barrier function (DTCBF)} for \eqref{eq: Discrete Dynamics} if $\forall \mb{q}\in \Sc$ there exists a $\mb{v} \in \R^m $ for which
    \begin{align} 
        h(\mb{F}(\mb{q}) + \mb{G}(\mb{q}) \mb{v}) \geq \rho h(\mb{q}).
    \end{align} 
\end{definition}

Similar to CBFs and continuous-time systems, DTCBFs keep the discrete-time system \eqref{eq: Discrete Dynamics} safe for each $k \in \mathbb Z_{\geq 0}$. Here the value of $h$ is lower bounded by a geometrically decaying curve, $h(\mb{q}_k) \geq \rho^k h(\mb{q}_0)$ \cite{agrawal2017discrete}.
Thus, they can be used to generate safe control actions through an optimization program wherein a desired but potentially unsafe input $\mb{v}^\textup{des}_k \in \R^m $ is minimally modified to produce a safe input: 
\begin{align}
    \mb{v}^\textup{safe}_k &= \underset{\mb{v}_k\in \R^m }{\textup{arg min}}  \quad \Vert \mb{v}_k - \mb{v}^\textup{des}_k(\mb{q_k})\Vert^2\\
   & \textup{s.t.}  \quad h(\mb{F}(\mb{q}_{k}) + \mb{G}(\mb{q}_{k}) \mb{v}_k) \geq \rho h(\mb{q_k}).
\end{align} 

\section{Dual Approach to CBF-RL}
In order to apply CBFs to RL pipelines, we characterize the relationship between continuous-time CBFs and discrete updates of the RL environment. With this relationship, we can utilize the closed-form solution of the continuous-time CBF-QP to understand its effect on the RL environment.
\begin{lemma}\label{lem:barrier-discretization}
    Suppose $h: \R^{n_q} \to \R$ is a $C^1$ CBF for the continuous-time single integrator $\dot q = v$, where $ q, v \in \R^{n_q}$. Let $k_s: \R^{n_q} \to \R^{n_q}$ be any safe, locally Lipschitz controller for the continuous-time integrator, satisfying
    \begin{align}
        \nabla h(\mb q)^\top k_s(\mb q) \geq -\alpha h(\mb q), \; \forall \mb q \in \R^{n_q},
    \end{align}
    for some $\alpha > 0$. Let $\mb f_{\mathrm{\Delta t}}: \R^{n_q} \times \R^{n_q} \to \R^{n_q},  (\mb q, \mb v) \mapsto \mb q + \mathrm{\Delta t}\, \mb v$ be the Euler discretization of the single integrator with time step $\mathrm{\Delta t} > 0$. There exists a continuous function $R: \R^{n_q} \times \R^{n_q} \to \R$ such that $\forall \mb q \in \R^{n_q}$, $\lim_{\norm{\mb w} \to 0}\frac{R(\mb q, \mb w)}{\norm{\mb w}} = 0$ and
    \begin{align}
        h(\mb f_\mathrm{\Delta t}(\mb q, k_s(\mb q))) \geq (1-\mathrm{\Delta t}\,\alpha) h(\mb q) - |R(\mb q, \mathrm{\Delta t}\, k_s(\mb q))|.
    \end{align}
\end{lemma}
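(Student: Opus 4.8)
The plan is to read the claim as a first-order Taylor expansion of the $C^1$ map $h$ about $\mb q$, with the increment supplied by a single Euler step. First I would define the remainder explicitly,
\begin{align}
    R(\mb q, \mb w) \defeq h(\mb q + \mb w) - h(\mb q) - \nabla h(\mb q)^\top \mb w,
\end{align}
so that, by construction, $R$ is continuous on $\R^n \times \R^n$ (it is assembled from $h$ and $\nabla h$, both continuous because $h \in C^1$), and the exact identity $h(\mb q + \mb w) = h(\mb q) + \nabla h(\mb q)^\top \mb w + R(\mb q, \mb w)$ holds with no approximation. Defining $R$ this way commits us to proving the two properties the lemma demands of it: continuity, which is immediate, and the little-o decay.

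The key analytic step is to verify $\lim_{\norm{\mb w}\to 0} R(\mb q, \mb w)/\norm{\mb w} = 0$ for each fixed $\mb q$. Here I would use the fundamental theorem of calculus along the segment from $\mb q$ to $\mb q + \mb w$, writing $h(\mb q + \mb w) - h(\mb q) = \int_0^1 \nabla h(\mb q + s\mb w)^\top \mb w \, ds$, which rearranges to $R(\mb q, \mb w) = \int_0^1 \bigl(\nabla h(\mb q + s\mb w) - \nabla h(\mb q)\bigr)^\top \mb w \, ds$. Cauchy--Schwarz then gives $\abs{R(\mb q, \mb w)} \le \norm{\mb w}\,\sup_{s\in[0,1]} \norm{\nabla h(\mb q + s\mb w) - \nabla h(\mb q)}$, and continuity of $\nabla h$ at $\mb q$ forces that supremum to vanish as $\norm{\mb w}\to 0$. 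This is the step I expect to carry the most weight: it is where the $C^1$ hypothesis is genuinely invoked, and it is what guarantees the discretization error is negligible relative to the step size rather than merely bounded.

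Finally I would specialize the increment to one Euler step, $\mb w = \mathrm{\Delta t}\, k_s(\mb q)$, and invoke the safety inequality. Substituting into the exact identity yields $h(\mb f_{\mathrm{\Delta t}}(\mb q, k_s(\mb q))) = h(\mb q) + \mathrm{\Delta t}\, \nabla h(\mb q)^\top k_s(\mb q) + R(\mb q, \mathrm{\Delta t}\, k_s(\mb q))$. Applying $\nabla h(\mb q)^\top k_s(\mb q) \ge -\alpha h(\mb q)$ bounds the linear term below by $-\mathrm{\Delta t}\,\alpha\, h(\mb q)$, and bounding the remainder below by $-\abs{R(\mb q, \mathrm{\Delta t}\, k_s(\mb q))}$ collapses the expression to the asserted inequality $(1-\mathrm{\Delta t}\,\alpha) h(\mb q) - \abs{R(\mb q, \mathrm{\Delta t}\, k_s(\mb q))}$. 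No regularity beyond $C^1$ of $h$ (the local Lipschitzness of $k_s$ is not actually needed for this pointwise estimate) enters the argument, so it is self-contained once the remainder is controlled.
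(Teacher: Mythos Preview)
Your proposal is correct and follows essentially the same route as the paper: define the first-order Taylor remainder $R(\mb q,\mb w)=h(\mb q+\mb w)-h(\mb q)-\nabla h(\mb q)^\top\mb w$, specialize $\mb w=\mathrm{\Delta t}\,k_s(\mb q)$, and combine the CBF inequality with $R\ge -|R|$. The only cosmetic difference is that the paper obtains continuity and the little-$o$ property of $R$ by citing a real-analysis text, whereas you spell them out via the integral form of the remainder; your remark that the local Lipschitz assumption on $k_s$ is unused in this lemma is also accurate.
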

\begin{proof}
    By \cite[Chapter 5.2 (2)]{pugh2002real}, there exists a continuous function $R: \R^{n_q}\times \R^{n_q} \to \R$ satisfying $\lim_{\norm{\mb w} \to 0}\frac{R(\mb q, \mb w)}{\norm{\mb w}} = 0$ and $h(\mb q + \mb w) = h(\mb q) + \nabla h(\mb q)^\top \mb w + R(\mb q, \mb w) \; \forall \mb q, \mb w \in \R^{n_q}$. Taking $\mb w = \mathrm{\Delta t}\, k_s(\mb q)$, we calculate $ h(\mb q + \mathrm{\Delta t}\, k_s(\mb q))$ as 
    \begin{align}
        &= h(\mb q) + \mathrm{\Delta t}\, \cdot [\nabla h(\mb q)^\top k_s(\mb q)] + R(\mb q, \mathrm{\Delta t}\,k_s(\mb q))\\
        &\geq h(\mb q) - \mathrm{\Delta t} \cdot \alpha h(\mb q) - |R(\mb q, \mathrm{\Delta t}\,k_s(\mb q))|.
    \end{align}
    Combining $h$ terms, the result follows.
\end{proof}
Using Lemma \ref{lem:barrier-discretization}, we bound the evolution of the barrier function along trajectories of the Euler-discretized integrator.
\begin{theorem}[Continuous to Discrete Safety]
    Consider the setting of Lemma \ref{lem:barrier-discretization}. Suppose there exists a compact, forward-invariant set $K \subseteq \R^{n_q}$ for the discrete-time integrator dynamics $\mb q_{k+1} = \mb f_{\mathrm{\Delta t}}(\mb q_k, k_s(\mb q_k))$. Then, $\forall \mathrm{\Delta t} > 0$, $\mu(\mathrm{\Delta t}) = \sup_{\mb q \in K} |R (\mb q, \mathrm{\Delta t}\, k_s(\mb q))| < + \infty$ and $\lim_{\mathrm{\Delta t} \to 0} \mu(\mathrm{\Delta t})/\mathrm{\Delta t} = 0$. Further, provided $(1 - \mathrm{\Delta t}\, \alpha) \in [0, 1)$, for $ \mb {q}_0 \in K$, 
\begin{align}
        h(\mb q_k) \geq (1 - \mathrm{\Delta t}\, \alpha)^k h(\mb q_0) - \frac{\mu(\mathrm{\Delta t})}{\mathrm{\Delta t}\, \alpha}, \; \forall k \geq 0.
    \end{align}
\end{theorem}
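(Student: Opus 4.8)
The plan is to split the statement into three pieces handled in order: the finiteness of $\mu(\mathrm{\Delta t})$, its superlinear decay $\mu(\mathrm{\Delta t})/\mathrm{\Delta t} \to 0$, and the geometric bound on $h(\mb q_k)$ obtained by unrolling a one-step recursion. Finiteness is immediate: since $k_s$ is locally Lipschitz (hence continuous) and $R$ is continuous by Lemma~\ref{lem:barrier-discretization}, the map $\mb q \mapsto |R(\mb q, \mathrm{\Delta t}\, k_s(\mb q))|$ is continuous on the compact set $K$ and attains a finite maximum, which is exactly $\mu(\mathrm{\Delta t})$.

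The main obstacle is the decay claim. The remainder estimate $\lim_{\norm{\mb w} \to 0} R(\mb q, \mb w)/\norm{\mb w} = 0$ from Lemma~\ref{lem:barrier-discretization} holds only \emph{pointwise} in $\mb q$, whereas I need it \emph{uniformly} over $K$ to control $\mu(\mathrm{\Delta t})/\mathrm{\Delta t}$. To upgrade to uniformity I would return to the explicit remainder supplied by Pugh, $R(\mb q, \mb w) = h(\mb q + \mb w) - h(\mb q) - \nabla h(\mb q)^\top \mb w$, and rewrite it through the $C^1$ fundamental theorem of calculus as $R(\mb q, \mb w) = \int_0^1 [\nabla h(\mb q + s\mb w) - \nabla h(\mb q)]^\top \mb w \, \mathrm{d}s$. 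Setting $M \defeq \sup_{\mb q \in K}\norm{k_s(\mb q)} < \infty$ (finite by compactness) and invoking uniform continuity of $\nabla h$ on a fixed compact neighborhood of $K$ with nondecreasing modulus $\omega$, I would bound $|R(\mb q, \mathrm{\Delta t}\, k_s(\mb q))| \leq \mathrm{\Delta t}\, M\, \omega(\mathrm{\Delta t}\, M)$ for every $\mb q \in K$ and all sufficiently small $\mathrm{\Delta t}$. Taking the supremum over $\mb q \in K$ and dividing by $\mathrm{\Delta t}$ yields $\mu(\mathrm{\Delta t})/\mathrm{\Delta t} \leq M\, \omega(\mathrm{\Delta t}\, M) \to 0$.

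For the trajectory bound I would apply Lemma~\ref{lem:barrier-discretization} at each step. Forward invariance of $K$ ensures $\mb q_k \in K$ for all $k$ whenever $\mb q_0 \in K$, so $|R(\mb q_k, \mathrm{\Delta t}\, k_s(\mb q_k))| \leq \mu(\mathrm{\Delta t})$ uniformly in $k$, and writing $\beta \defeq 1 - \mathrm{\Delta t}\,\alpha \in [0,1)$ the lemma gives the linear recursion $h(\mb q_{k+1}) \geq \beta\, h(\mb q_k) - \mu(\mathrm{\Delta t})$. Unrolling it produces $h(\mb q_k) \geq \beta^k h(\mb q_0) - \mu(\mathrm{\Delta t}) \sum_{j=0}^{k-1}\beta^j$, and bounding the partial geometric sum by $1/(1-\beta) = 1/(\mathrm{\Delta t}\,\alpha)$ gives the claimed inequality. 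The finiteness and recursion steps are routine; the genuine work is the uniform-in-$\mb q$ remainder estimate, since without compactness of $K$ the pointwise little-$o$ property would not suffice.
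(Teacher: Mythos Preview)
Your proposal is correct. The finiteness step and the recursion/unrolling step are essentially identical to the paper's proof: continuity on a compact set for the former, and Lemma~\ref{lem:barrier-discretization} plus a geometric-series comparison $h(\mb q_{k+1}) \geq \beta\, h(\mb q_k) - \mu(\mathrm{\Delta t})$ for the latter.

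Where you genuinely diverge is in the decay claim $\mu(\mathrm{\Delta t})/\mathrm{\Delta t} \to 0$. The paper argues abstractly: from the pointwise limit $\lim_{\mathrm{\Delta t}\to 0}|R(\mb q,\mathrm{\Delta t}\,k_s(\mb q))|/\mathrm{\Delta t}=0$ for each $\mb q\in K$, it invokes joint continuity of $(\mb q,\mathrm{\Delta t})\mapsto |R(\mb q,\mathrm{\Delta t}\,k_s(\mb q))|$ and compactness of $K$ to justify interchanging $\lim_{\mathrm{\Delta t}\to 0}$ and $\sup_{\mb q\in K}$. You instead open the black box: you write $R$ in integral form via the $C^1$ mean-value identity, bound $\norm{k_s}$ on $K$ by $M$, and use a modulus of continuity $\omega$ for $\nabla h$ on a fixed compact neighborhood to obtain the explicit estimate $\mu(\mathrm{\Delta t})/\mathrm{\Delta t}\leq M\,\omega(\mathrm{\Delta t}\,M)$. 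Your route is more constructive and yields a quantitative rate, and it sidesteps any delicacy in the paper's limit--supremum swap (the paper states uniform continuity of $|R(\mb q,\mathrm{\Delta t}\,k_s(\mb q))|$, not of the quotient, so the interchange step there is somewhat informal). The paper's approach is shorter but leans more on a soft topological principle; yours is longer but fully explicit.
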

\begin{remark}
    As a consequence of the limiting behavior of $\mu$, as we take the discretization step $\mathrm{\Delta t}$ to zero, the standard DTCBF bound, $h(\mb q_k) \geq (1 - \mathrm{\Delta t}\, \alpha)^k h(\mb q_0)$, is recovered.
\end{remark}
\begin{proof}
    Fix $\mathrm{\Delta t}> 0$. Since $R$ and $k_s$ are continuous, compactness of $K$ implies $\mu(\mathrm{\Delta t})$ is finite $ \forall \mathrm{\Delta t} > 0$. To establish the limit $\lim_{\mathrm{\Delta t} \to 0} \mu(\mathrm{\Delta t})/\mathrm{\Delta t} = 0$, we note that $ \lim_{\mathrm{\Delta t} \to 0} \tfrac{|R(\mb q, \mathrm{\Delta t}\, k_s(\mb q))|}{\mathrm{\Delta t}}= 0 \; \forall \mb q \in K$ implies
    \begin{align}
        \sup_{\mb q \in K} \lim_{\mathrm{\Delta t} \to 0} \tfrac{|R(\mb q, \mathrm{\Delta t}\, k_s(\mb q))|}{\mathrm{\Delta t}} = 0.
    \end{align}
    Compactness of $K$ and joint continuity of $|R(\mb q, \mathrm{\Delta t}\, k_s(\mb q))|$ in $\mb q$ and $\mathrm{\Delta t}$ implies uniform continuity of $|R(\mb q, \mathrm{\Delta t}\, k_s(\mb q))|$; this lets us interchange limit and supremum. We conclude $\lim_{\mathrm{\Delta t} \to 0} \sup_{\mb q \in K} \frac{|R(\mb q, \mathrm{\Delta t}\, k_s(\mb q))|}{\mathrm{\Delta t}} = 0 \Rightarrow \lim_{\mathrm{\Delta t} \to 0} \tfrac{\mu(\mathrm{\Delta t})}{\mathrm{\Delta t}} = 0$.

    Now, we establish the bound using a comparison system. If $\mb y_{k+1} = \rho \mb y_k - |\mb d_k|$ for all $ k \geq 0$ and $|\mb d_k| \leq \mu$, a geometric series argument establishes $\mb y_k \geq \rho^k \mb y_0 - (\frac{1}{1 - \rho}) \mu \; \forall k \geq 0$. Fix $\mb q_0 \in K$. Since $K$ is forward invariant for the closed-loop system, $|R(\mb q_k, \mathrm{\Delta t}\, k_s(\mb q_k))| \leq \mu(\mathrm{\Delta t})$ $\forall k \geq 0$. Using Lemma \ref{lem:barrier-discretization}, we take $\rho = 1 - \mathrm{\Delta t} \alpha$, $\mu(\mathrm{\Delta t}) = \sup_{\mb q \in K} |R (\mb q, \mathrm{\Delta t}\, k_s(\mb q))|$, and conclude by comparison with $\mb y_k$ that the bound
    \begin{align}
        h(\mb q_k) \geq (1 - \mathrm{\Delta t}\, \alpha)^k h(\mb q_0) - \frac{\mu(\mathrm{\Delta t})}{\mathrm{\Delta t} \,\alpha},
    \end{align}
    does indeed hold for all $\mb q_0 \in K$.
\end{proof}
\vspace{-5pt}
The above means that with small enough $\Delta t$ (typically $\Delta t \le 0.01$s, governed by the stability limits and solve time of the physics engine), continuous-time CBF tools can be directly applied to discrete-time RL environments. We provide the two core parts of CBF-RL shown in Fig. \ref{fig:system}:

\newsec{Safety-filtering during training}
At training time, the policy would generate desired actions that are not necessarily safe $\mb{v}_k^{\mathrm{policy}}$ at step $k$.
A safety filter is then applied to enforce safe behaviors and guide the system to 'learn' the safety filter as part of the natural closed-loop dynamics.
As we have proven the relationship between the continuous-time and discrete-time CBF conditions, we replace the typically nonlinear DTCBF constraint with the continuous-time CBF first-order inequality. This reduces the safety filtering problem to a single linear-constraint quadratic program (QP) \cite{ames2019control}:
\vspace{-5pt}
\begin{align} 
    \mb{v}_k^{\mathrm{safe}} &= \arg \min_{\mb{v}_k\in \mathbb{R}^{n_q}} \frac{1}{2} ||\mb{v}_k-\mb{v}_k^{\mathrm{policy}}||^2 \\
    &\mathrm{s.t.} \quad  \nabla h(\mb{q}_k)^\top \mb{v}_k \geq -\alpha h(\mb{q}_k).
\end{align} 
\vspace{-2em}

While this QP provides a safe control input, solving an optimization program numerically at every step of RL training is computationally undesirable, especially in massively parallel environments such as IsaacLab \cite{mittal2023orbit}. Fortunately, since the QP has a single linear constraint, it can be solved analytically in closed-form:
\begin{align}
\mb{v}_k^{\mathrm{safe}}
&=
\begin{cases}
 \mb{v}_k^{\mathrm{policy}}, & \text{if } \mathbf{a}_k^\top \mb{v}_k^{\mathrm{policy}} \ge b_k \\[6pt]
 \mb{v}_k^{\mathrm{policy}} + \dfrac{\bigl(b_k - \mathbf{a}_k^\top \mb{v}_k^{\mathrm{policy}}\bigr)\,\mathbf{a}_k}{\|\mathbf{a}_k\|^2}, & \text{ o.w.}
\end{cases}
\label{eq:closed-form-sol}
\\[4pt]
\mathbf{a}_k &:= \nabla h(\mb{q}_k), \qquad
b_k := -\alpha\,h(\mb{q}_k).
\end{align}

\vspace{-1em}
\newsec{Penalizing unsafe behavior}
In addition to filtering the policy actions at training time, we also quantify how safe the environment is to inform training. 
To this end, we modify the rewards to include $r_{{\mathrm{CBF}}}$ defined as
\vspace{-4pt}
\begin{align} 
    r_{{\mathrm{cbf}}}(\mb{q}_k, \mb{v}_k) = &\min\!\left( \mb{a}_k^\top\mb{v}^{\mathrm{policy}}_k - b_k,\, 0\right) \\ &+ \big (\exp\!\left(-\tfrac{\|\mb{v}^{\mathrm{policy}} - \mb{v}^{\mathrm{safe}}\|^2}{\sigma^2}\right) -1\big)
    \label{eq:reward-cbf-penalty}
\end{align} 
and the whole reward is thus
$
    r = r_{{\mathrm{nominal}}} + r_{{\mathrm{cbf}}}.
$

Intuitively, this reward penalizes actions whenever the safety filter is activated, and also incentivizes the model to take actions as close to the safe actions as possible to reduce the intervention of the filter. Because the penalty term $\exp(\dots)-1$ is strictly lower-bounded by $-1$, it provides a smooth learning signal without causing unbounded instability with respect to the primary task rewards.
The whole algorithm is as shown in
Alg. \ref{Alg:CBF_RL}.
\vspace{-2ex}
\begin{center}
\resizebox{0.9\columnwidth}{!}{
\begin{minipage}{\columnwidth}
\begin{algorithm}[H]
\caption{RL Training with Discrete-Time CBF Safety }
\label{Alg:CBF_RL}
\begin{algorithmic}[1]
\State Initialize policy parameters $\theta$, initial configuration $q_0$, safety function $h$
\For{step $=1$ to $N_{\mathrm{steps}}$}
\State Initialize $q_0$, observation $\mb{o}_0$
\For{$k = 0$ to $T-1$}
\State $\mb{v}^{\mathrm{policy}}_k \gets \pi_\theta(\mb{o}_k)$
\State $\mb{q}^{\mathrm{policy}}_{k+1} \gets \mb{q}_k + \Delta t \, \mb{v}^{\mathrm{policy}}_k$
\State $\mb{a}_k = \nabla h(\mb{q}_k)$, $b_k = -\alpha h(\mb{q}_k)$
\State Compute CBF condition: $c = \mb{a}_k^\top\mb{v}_k^{\mathrm{policy}} - b_k$
\State \textbf{if} {$c \ge 0$} \textbf{then} $\mb{v}^{\mathrm{safe}}_k \gets \mb{v}^{\mathrm{policy}}_k$, 
\State \textbf{else} $\mb{v}^{\mathrm{safe}}_k \gets \mb{v}^{\mathrm{policy}}_k + \dfrac{(b_k - \mathbf{a}_k^\top v^{\mathrm{policy}}_k)\,\mathbf{a}_k^{\!\top}}{\|\mathbf{a}_k\|^2}$
\State $\mb{q}^{\mathrm{safe}}_{k+1} \gets \mb{q}_k + \Delta t\,\mb{v}^{\mathrm{safe}}_k$
\State $\mb{q}^{\mathrm{env}}_{k+1}, \mb{o}_{k+1} \gets \textsc{environment\_update}(\mb{q}^{\mathrm{safe}}_{k+1})$
\State $r \gets w\cdot [\min\!\left( \mb{a}_k^\top\mb{v}^{\mathrm{policy}}_k - b_k,0\right) + \exp\!\left(-\tfrac{\|\mb{v}^{\mathrm{policy}} - \mb{v}^{\mathrm{safe}}\|^2}{\sigma^2}\right)-1] +  R(\mb{q}^{\mathrm{env}}_{k+1})$
\State Store transition: $(\mb{q}_k,\mb{o}_k,\mb{q}^{\mathrm{policy}}_{k+1},\mb{q}^{\mathrm{safe}}_{k+1},\mb{q}^{\mathrm{env}}_{k+1}, r)$
\EndFor
\State Update policy parameters $\theta \gets \eta \nabla_\theta \mathcal{L}(\theta, r)$
\EndFor
\end{algorithmic}
\end{algorithm}
\end{minipage}
}
\end{center}
\newsec{Single Integrator Example}
To demonstrate our proposed approach and analyze the effect of each component of CBF-RL, we perform extensive ablation studies on a single integrator navigation task.
The agent is placed into a 2D world with obstacles and tasked with going to a specified goal.
The starting, obstacle, and goal locations are all randomized, and extra care is taken such that the agent always initializes in a safe state following \cite{bejarano_safety_2025}.
The single integrator has dynamics $\mb{q}_{k+1}= \mb{q}_k + \mb{v}_k \Delta t$
where $\mb{q}=(x,y)$ is the robot position, $\mb{v}_k$ is the velocity of the agent, and $\Delta t$ is the timestep size.
The safety barrier function $h$ is defined as 
\begin{align}
\vspace{-2em}
h(\mb{q}) &=
\begin{aligned}[t]
&\min \Big\{ \min_{j}\big(\|\mb{q} - \mb{p}_j\| - (r_{\text{agent}} + r_j)\big), \\
&\quad x - r_{\text{agent}},\ (L - x) - r_{\text{agent}}, \\
&\quad y - r_{\text{agent}},\ (L - y) - r_{\text{agent}} \Big\}
\end{aligned} 
\label{eq:cbf_h} \\
\nabla h(\mb{q}) &=
\begin{cases}
\dfrac{\mb{q} - \mb{p}_{j^\star}}{\|\mb{q} - \mb{p}_{j^\star}\|}, & j^\star \in \arg\min_j h_j(\mb{q}), \\[1ex]
\pm e_x, & \text{if left/right wall active}, \\
\pm e_y, & \text{if bottom/top wall active},
\end{cases}
\label{eq:cbf_grad}
\end{align}
where $\mb{p}_j$ is the position of the jth obstacle with radius $r_j$, the agent also has radius $r_{\text{agent}}$ and the size of the world is $L$.
Thus we can formulate the training-time safety filter with \eqref{eq:closed-form-sol} and define our reward modification associated with the safety with \eqref{eq:reward-cbf-penalty}.
Full reward terms are shown in Table II.

\newsec{Ablation} 
To validate our method, we train 4 variants (Dual, Reward-only, Filter-only, Nominal) and test 12 variants following Table III for 1500 steps with 4096 parallel environments. 
We also evaluate policies trained with filtered action then deployed without a runtime filter (rt. filt.).
The training progress can be seen in Fig.~\ref{fig:toy_rewards}, where we observe that the Dual and Filter-only approaches achieve rapid convergence while remaining safe throughout training.
Furthermore, as illustrated by the trajectory comparisons in Fig.~\ref{fig:toy_results} as an example from over 1000 random test environments, the Dual approach is able to reach the goal in both domain randomized and non-domain randomized settings, even without a runtime filter, while the other methods fail to do so.
Notably, the Filter Only approach performs well only with an active safety filter and degrades markedly without it.

\newsec{Robustness}
To further investigate the robustness of the policy induced by domain randomization (DR), we train the dual approach with noise on the dynamics model, i.e.
$
    \mb{q}_{k+1}= \mb{q}_k + (\mb{v}_k + \mb{d})\Delta t
$
where $\mb{d}$ follows the standard normal distribution scaled by 20\% to the maximum velocity.
It is observed that the policy trained with the dual method overall suffers least from the dynamics disturbance as can  be seen in Table \ref{table: ablation DR}.
\vspace{-1ex}
\begin{table}[h]
\centering
\renewcommand{\arraystretch}{1.2}
\setlength{\tabcolsep}{6pt}
\caption{Success rates over 1000 random test environments  for different methods, with and without DR. The dual policy consistently performs well and suffers less degradation due to dynamics uncertainty.} 

\begin{tabular*}{\columnwidth}{l c c c}
\hline
 & \,  \textbf{Dual}\,   & \,  \textbf{Dual (w/o rt. filt.)} \,  &\,    \textbf{Reward Only} \\
\hline
No DR & $99.0\%$ & $92.7\%$ & $91.9\%$ \\
DR    & $\mb{99.0\% (-0\%)}$ & $\mb{91.7\% (-1\%)}$ & $87.6\%(-4.3\%)$ \\
\hline
\end{tabular*}

\vspace{0.4em} 

\begin{tabular*}{\columnwidth}{l c c c}
\hline
 & \,  \textbf{Filter Only} \, & \textbf{Filter Only (w/o rt. filt.)} \, & \, \textbf{Nominal}  \,\\
\hline
No DR & $98.8\%$ & $38.7\%$ & $51.4\%$ \\
DR    & $96.7\%(-2.1\%)$ & $36.8\%(-1.9\%)$ & \hspace{-2em} $55.0\%(+3.6\%)$ \\
\hline
\end{tabular*}

\vspace{-10pt}
\label{table: ablation DR}
\end{table}
\begin{figure*}[t]
\centering

\begin{minipage}[t]{0.48\textwidth}
    \vspace{0pt}
    \centering
    \includegraphics[width=0.9\linewidth]{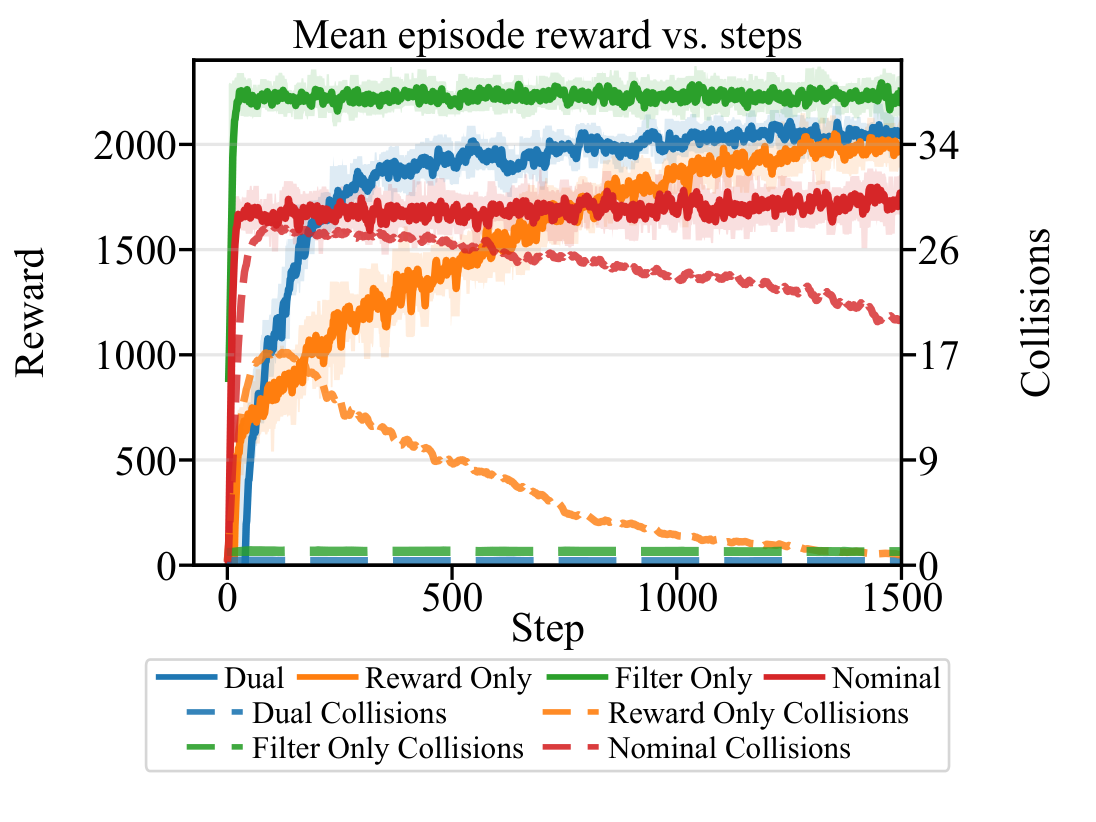}
    \vspace{-1ex}
    \caption{Training progress with the Dual, Reward Only, Filter Only and Nominal methods. Dual and Filter Only achieve faster convergence and avoid training-time safety violations.}
    \label{fig:toy_rewards}
\end{minipage}\hfill
\begin{minipage}[t]{0.48\textwidth}
      \vspace{0pt}
    \centering    
    \includegraphics[width=0.9\linewidth]{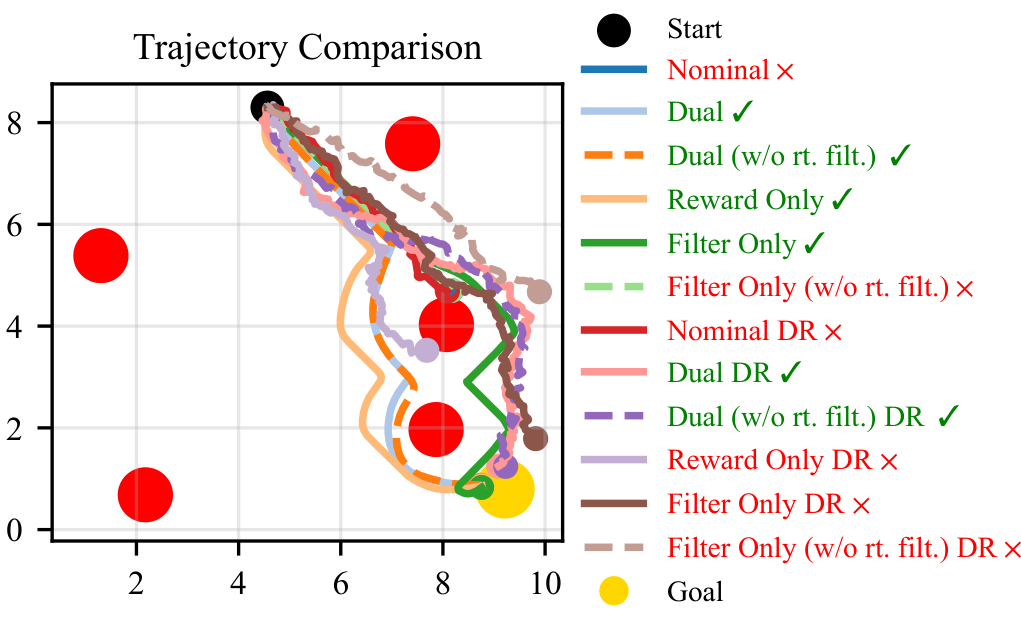}
    \caption{Trajectory comparisons of one example simulation. The black dot is the start and the yellow circle is the goal. Success = reaching the goal; failure = collision with obstacles or wall.}
    \label{fig:toy_results}
\end{minipage}
\vspace{1em} 

\begin{minipage}[t]{0.48\textwidth}
      \vspace{0pt}
    \centering
    \footnotesize
    \captionof{table}{Reward terms for single integrator navigation. 
    Agent is rewarded for staying alive and closing the distance to the goal and penalized for hitting the obstacles / walls, exceeding the set time to complete the task, and violating the CBF conditions or not proposing safer actions}
        \label{tab:reward_toy}

    \begin{tabular}{l l}
    \toprule
    \textbf{Reward} & \textbf{Formula} \\
    \midrule
    $r_{\mathrm{goal}}$ & $1.0 \cdot \mathbf{1}(\text{goal reached})$ \\
    $r_{\mathrm{obstacle}}$ & $-1.0 \cdot \mathbf{1}(\text{obstacle collision})$ \\
    $r_{\mathrm{wall}}$ & $-1.0 \cdot \mathbf{1}(\text{wall collision})$ \\
    $r_{\mathrm{progress}}$ & $20.0 \cdot \dfrac{\|\mathbf{p}_{t-1}-\mathbf{g}\|-\|\mathbf{p}_t-\mathbf{g}\|}{v_{\max}\Delta t}\cdot \mathbf{1}(\text{active})$ \\
    $r_{\mathrm{alive}}$ & $0.01 \cdot \mathbf{1}(\text{active})$ \\
    $r_{\mathrm{cbf}}$ &
    \(\begin{aligned}[t]
      &100\cdot\Big(\min(\nabla h(\mathbf{q})^\top \mathbf{v}+\alpha h(\mathbf{q}),\,0) \\
      &+ \exp\!\big(-\tfrac{\|\mathbf{v}_{\mathrm{policy}}-\mathbf{v}_{\mathrm{safe}}\|^2}{0.5^2}\big) -1\Big)\cdot\mathbf{1}(\text{active})
    \end{aligned}\) \\
    $r_{\mathrm{timeout}}$ & $-10.0 \cdot \mathbf{1}(\text{time exceeded})$ \\
    \bottomrule
    \end{tabular}
    
\vspace{-4em}
\end{minipage}\hfill
\begin{minipage}[t]{0.48\textwidth}
      \vspace{-3ex}
    \centering
    \footnotesize
        \captionof{table}{List of method configurations for single integrator navigation. We ablate all permutations of the two main components of CBF-RL.}

    \setlength{\tabcolsep}{3pt}
    \renewcommand{\arraystretch}{1.1}
    \begin{tabular}{l l l p{0.08\textwidth}}
    \toprule
    \textbf{Method} & \textbf{Training} & \textbf{Deployment} & \textbf{DR} \\
    \midrule
    Nominal & Nominal & No Runtime Filter & No \\
    Dual & Reward+Filter & Runtime Filter & No \\
    Dual (w/o rt. filt.) & Reward+Filter & No Runtime Filter & No \\
    Reward Only & Reward & No Runtime Filter & No \\
    Filter Only & Filter & Runtime Filter & No \\
    Filter Only (w/o rt. filt.) & Filter & No Runtime Filter & No \\
    Nominal DR & Nominal & No Runtime Filter & Yes \\
    Dual DR & Reward+Filter & Runtime Filter & Yes \\
    Dual (w/o rt. filt.) DR & Reward+Filter & No Runtime Filter & Yes \\
    Reward Only DR & Reward & No Runtime Filter & Yes \\
    Filter Only DR & Filter & Runtime Filter & Yes \\
    Filter Only (w/o rt. filt.) DR & Filter & No Runtime Filter & Yes \\
    \bottomrule
    \end{tabular}
    \label{tab:methods}
\end{minipage}
\label{fig:crosspage}
\end{figure*}
\vspace{-1ex}
\section{CBF-RL for Safe Humanoid Locomotion}
\vspace{-1pt}
In the following section, we present two different use cases of CBF-RL in humanoid locomotion to show the generality and performance of our method. Each uses a different set of user-specified reduced-order coordinates and a valid CBF for the resulting reduced-order dynamics.
The nominal rewards and observations (history length 5) follow \cite{zakka2025mujoco}.
Note that for blind stair climbing, we employ the asymmetric actor-critic method \cite{pinto2017asymmetric} and provide a height scan of size 1m $\times$ 1.5m with a resolution of 0.1m to the critic observation with a history length of 1.
We use an MLP policy with 3 hidden layers of $[512, 256, 128]$ neurons running at $50$Hz, outputting 
the joint position setpoints of the 12-DoF lower body for the Unitree G1. 
We train in IsaacLab with 4096 environments with $\Delta t = 0.005$s. Each episode lasts for a maximum of 20,000 steps on the NVIDIA RTX 4090 
GPU and perform zero-shot hardware transfer experiments to verify our approach.
\vspace{-1ex}
\subsection{Task Definition}
\vspace{-1ex}
\newsec{Planar Obstacle Avoidance} First, we consider the task where a humanoid robot needs to avoid obstacles during locomotion without  intervention, even when the velocity command is to collide with the obstacle. 
Thus, we can simplify the safety problem as a single integrator problem where the policy modulates the robot's planar velocities $\mb{v}^{\mathrm{base}}_{\mathrm{planar}}=[v_x,v_y]$ to maintain safe distances from the closest obstacle, a cylinder centered at $\mb{p}^r_o$ in the robot frame.
We define the safety function 
$
    h(\mb{p}) = ||\mb{p}^r_o|| - R_r - R_o
$\
where $R_r$ and $R_o$ are the radii of the robot and the obstacle respectively.
We then train our robot with the CBF reward:
\vspace{-1ex}
\begin{align} 
    r_\mathrm{obstacle\, cbf} = &\min(\tfrac{\mb{p}^r_o}{||\mb{p}^r_o||}^\top\, \mb{v}_{\mathrm{planar}}
+ \alpha h(\mb{p}),0) \nonumber\\&+ \exp\!\left(-\tfrac{\|\mb{v}^{\mathrm{base}}_{\mathrm{planar}} - \mb{v}^{\mathrm{safe}}_{\mathrm{planar}}\|^2}{\sigma^2}\right) - 1.
\end{align} 

\begin{figure*}[htbp]
  \centering
\includegraphics[width=0.88\linewidth]{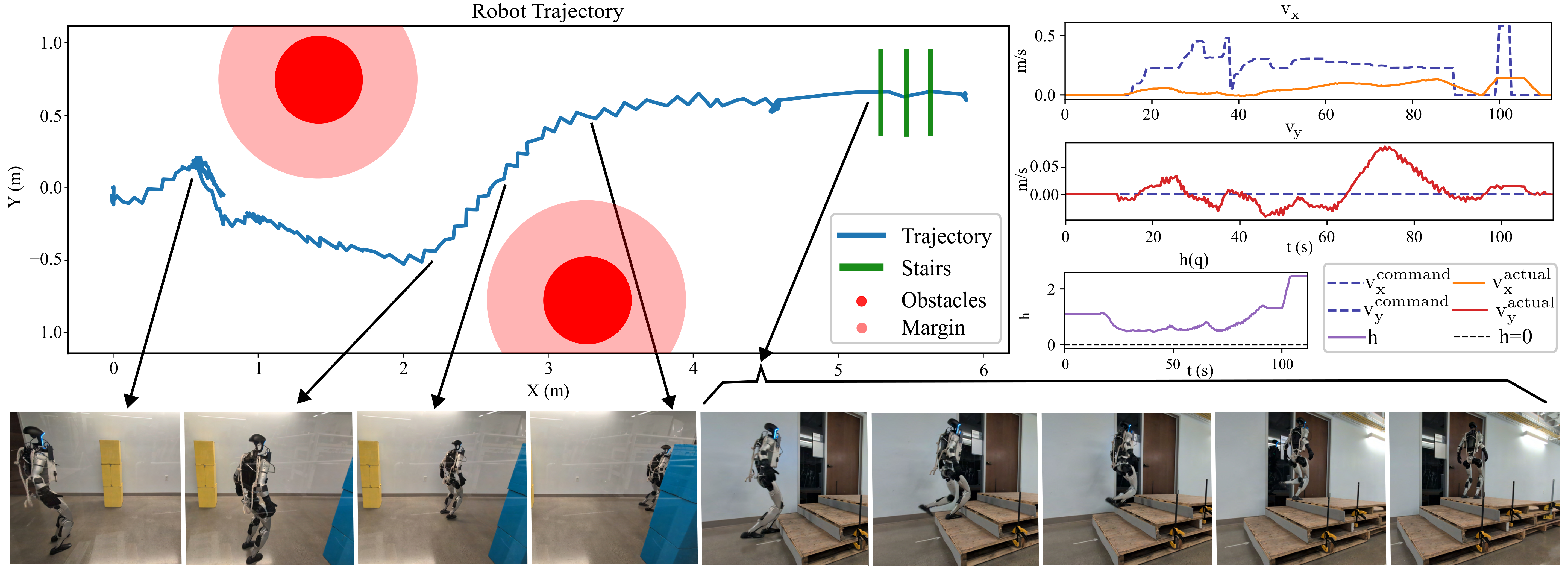}
 \vspace{-1ex}
  \caption{Robot trajectory, $h$, and command vs. actual velocity visualization. The robot avoids obstacles approximated as cylinders without a runtime safety filter and climb up stairs. The velocity plots show the robot modulating its own velocities despite the command and the $h$ plot quantifies safety.}
  \label{fig:dummy}
  \vspace{-1em}
\end{figure*}
\begin{figure*}
    \centering
   \includegraphics[width=0.88\linewidth]{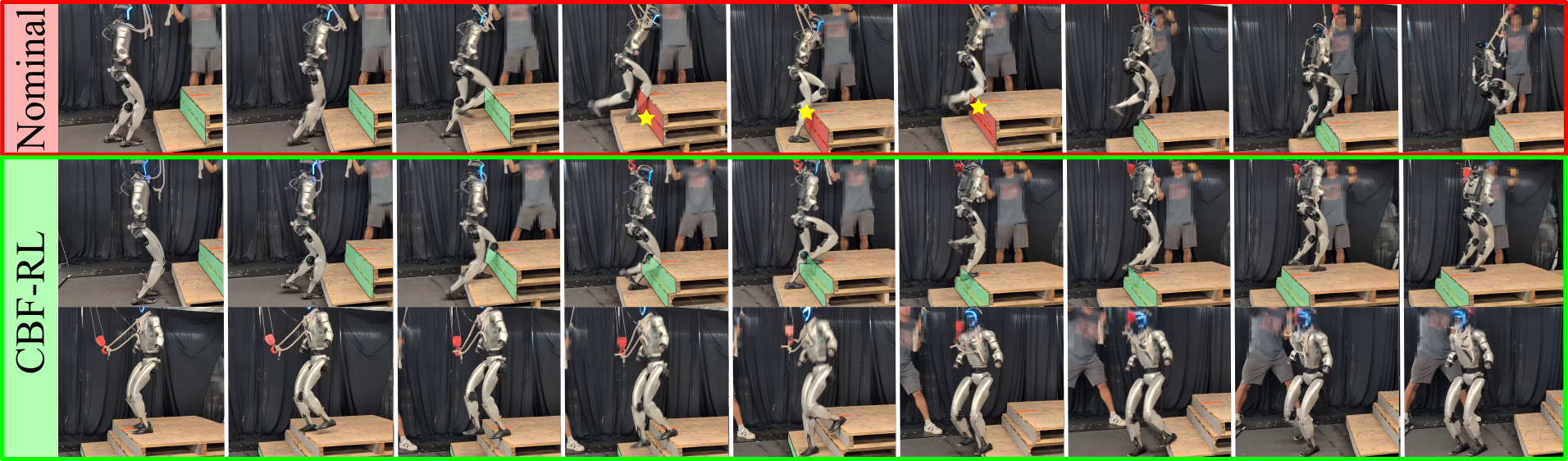}
    \caption{Snapshots of high stairs of riser height 0.3m. The nominal policy clips its feet against the riser and stumbles, as shown with the red CBF violations while the CBF-RL dual trained policy successfully climbs up and down. The yellow star marks the point the robot's feet collides with the stair riser.}
    \vspace{-1em}
    \label{fig:inside-stairs}
\end{figure*}

\begin{figure*}
    \centering
   \includegraphics[width=0.88\linewidth]{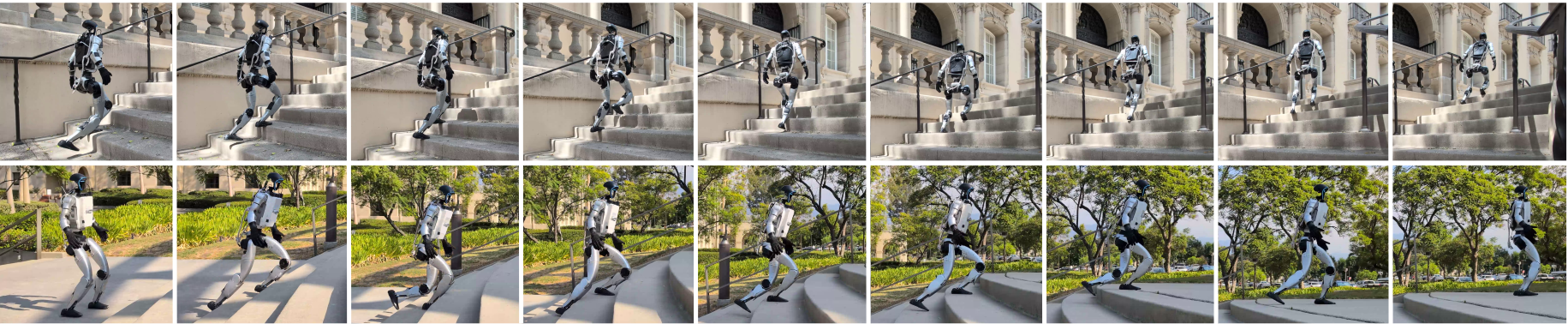}
   
    \caption{Snapshots of outdoor experiments. The robot is able to climb up stairs of varying roughness, tread depths and riser heights.}
    \label{fig:outside-stairs}
    \vspace{-3ex}
\end{figure*}
\vspace{-1ex}
\newsec{Stair Climbing}
Second, we consider the task of humanoid locomotion on stairs. For this task, we consider the kinematic model of the foot as our reduced-order model
$
    \mb{q}^{\mathrm{sw}}_{k+1} = \mb{q}^{\mathrm{sw}}_{k} + \Delta t \, \mathbf{J}^{\mathrm{sw}}(\mb{q}^{\mathrm{sw}}_{k})\mb{v}^{\mathrm{sw}}_k 
$, 
where $\mb{q}^{\mathrm{sw}} = [p_x, p_y]^T$ is the swing foot's position in the body frame, $\mathbf{J}^{\mathrm{sw}}$ comprises the rows of the robot's body Jacobian associated with the foot's position, and $\mb{v}^{\mathrm{sw}}$ is robot's joint velocities. 
In climbing stairs, one problem is the robot hitting its toe against the next stair riser.
We design the barrier as the distance to a hyperplane tangent to the stair after the one it is currently stepping on:
$    h(\mb{q}) = p^{stair}_x - p_x$,
where $p^{stair}_x$ is the $x$ position of the hyperplane in the body reference frame. 
Thus the CBF reward is:
\vspace{-1ex}
\begin{align} 
    r_{\mathrm{next\,stair\,cbf}} =&
\min\!\left(\; \big(- \mathbf{J}^{\mathrm{sw}}_x(\mb{q})\big)\mb{v} + \alpha\,h(\mb{q}),0\right) \nonumber\\ &+
\exp(-\frac{\|\mb{q}-\mb{q}_\mathrm{safe}||^2}{\sigma^2}) - 1.
\end{align} 
added to the nominal rewards including modifications to the feet clearance reward where the reference feet height now is dependent on the stair at the front of the robot and also a penalty on the swing foot force.  

\newsec{Hardware Deployment Considerations}
The stochasticity of real-world state estimation hinders the deployment of runtime explicit filters. CBF-RL is able to internalize safety by learning to map noisy observations to inherently safe actions, thus reducing the reliance of such a filter.

\vspace{-5pt}
\subsection{Hardware Experiments}
\vspace{-5pt}
\newsec{Obstacle Course}
The obstacle course comprises 2 parts: the first part is the obstacle avoidance task, where the robot has to prevent itself from colliding with the obstacles, even if the velocity command intends otherwise;
 the second part comprises stairs constructed out of wooden pallets with a riser height of 0.14m and tread depth of 0.3m. 
During execution, the robot locates the obstacles approximated as cylinders using the ZED 2 RGB-D camera through point cloud clustering.
As seen from the $h$ values in Fig. \ref{fig:dummy}, the robot modulates its own velocity to avoid the obstacle, even when the velocity command prompts it to do so.
However, it has no terrain perception and only uses proprioception for stair climbing. 
Despite this, we observe that the robot uses proprioception to determine when to climb and how high to lift its feet, successfully climbing the wooden stairs.

\vspace{-2pt}
\newsec{Stair Climbing Robustness}
In order to further test the robustness of our method, we experiment both indoors and outdoors.
During indoor experiments, we perform continuous runs up and down stairs and also test the performance of the policy on high stairs, with the dual-trained policy able to climb up stairs 0.3m high and the policy trained without the CBF modifications unable to do so, as shown in Fig. \ref{fig:inside-stairs}.
Here we note that the robot is able to gage the depth and height of the stairs through proprioception and adjust its footsteps accordingly. 
For outdoor experiments, we test the dual-trained policy on concrete-poured stairs of different roughness and sizes, with rougher stairs of riser height 0.14m / tread depth 0.33m and smoother stairs of riser height 0.15m / tread depth 0.4m respectively, as shown in Fig. \ref{fig:outside-stairs}, where the robot could adjust its center of mass by modulating the torso pitch angle to account for deeper and higher stairs.

\section{Conclusion}
This paper introduces CBF-RL, a lightweight dual approach to inject safety into learning by combining training-time CBF safety filtering with reward design, leading to policies that internalize safety and operate without a runtime filter, demonstrating its effectiveness through simulated and real-world experiments.
Looking forward, we plan to incorporate automated barrier discovery, perception-based barriers, and extend the application of CBF-RL beyond locomotion to whole-body loco-manipulation, which will address broader humanoid capabilities.

\newsec{Acknowledgment}
During the preparation of this work, the authors used AI-assisted capabilities (Claude, ChatGPT) for grammar and editing enhancements of the text, as well as for debugging software code related to the experiments. The authors have responsibly reviewed and verified all AI-generated content to ensure correctness.
\balance
\bibliographystyle{IEEEtran}
\bibliography{references}

\end{document}